\newcommand{\ints}{\int_\mathcal{S}}
\renewcommand{\d}{\mathrm{d}}
\DeclareMathOperator*{\argmax}{arg\,max}
\DeclareMathOperator*{\argmin}{arg\,min}
\newcolumntype{P}[1]{>{\centering\arraybackslash}p{#1}}
\newcolumntype{M}[1]{>{\centering\arraybackslash}m{#1}}
\algnewcommand{\LineComment}[1]{\State \(\triangleright\) #1}
\newcommand{\vtheta}{{\bm{\theta}}}
\newcommand{\vphi}{{\bm{\phi}}}
\newcommand{\vomega}{{\bm{\omega}}}
\DeclareRobustCommand{\eg}{e.g.,\@\xspace}
\DeclareRobustCommand{\ie}{i.e.,\@\xspace}
\DeclareRobustCommand{\wrt}{w.r.t.\@\xspace}
\newcommand{\E}{\mathop{\mathbb{E}}}
\title{How to Learn a Useful Critic? Model-based Action-Gradient-Estimator Policy Optimization}
\author{%
  Pierluca D'Oro\thanks{Work done while at NNAISENSE.}\\
  MILA, Université de Montréal\\
  \texttt{pierluca.doro@mila.quebec} \\
   \And
   Wojciech Jaśkowski \\
   NNAISENSE \\
   \texttt{wojciech@nnaisense.com} \\
}
\begin{document}

\maketitle

\begin{abstract}
Deterministic-policy actor-critic algorithms for continuous control improve the actor by plugging its actions into the critic and ascending the action-value gradient, which is obtained by chaining the actor's Jacobian matrix with the gradient of the critic with respect to input actions.
However, instead of gradients, the critic is, typically, only trained to accurately predict expected returns, which, on their own, are useless for policy optimization.
In this paper, we propose MAGE, a model-based actor-critic algorithm, grounded in the theory of policy gradients, which explicitly learns the action-value gradient.
MAGE backpropagates through the learned dynamics to compute gradient targets in temporal difference learning, leading to a critic tailored for policy improvement.
On a set of MuJoCo continuous-control tasks, we demonstrate the efficiency of the algorithm in comparison to model-free and model-based state-of-the-art baselines.
\end{abstract}

\section{Introduction}
Reinforcement learning (RL)~\cite{Puterman1994MarkovDP,sutton2018reinforcement} studies sequential decision making problems, in which an agent aims at maximizing the cumulative reward it collects in an environment.
One of the most popular classes of algorithms for RL are \emph{policy gradient methods}~\cite{sutton2000policy,deisenroth2013survey}, which involve differentiable control policies improved by gradient ascent. They feature suitability to environments with continuous state and action spaces, and compatibility with state-of-the-art deep learning~\cite{schmidhuber2015deep} methods.
Policy gradient algorithms often employ an actor-critic~\cite{konda2000actor} scheme: an \emph{actor}, which determines the control policy, is evaluated using a \emph{critic}.
Thus, the degree of actor's improvement is limited by the information provided by the critic, naturally raising the question of how the critic should be trained.

Typically, algorithms that use powerful function approximators~\cite{lillicrap2015continuous,fujimoto2018addressing} learn the critic by temporal difference~\cite{sutton1988learning}, optimizing for an accurate prediction of the expected return of the actor.
For deterministic-policy continuous-control~\cite{silver2014deterministic,lillicrap2015continuous}, however, the value provided by the critic is neither used for improving the policy nor for acting in the environment~\cite{sutton2000policy}.
Instead, only the \emph{action-gradient} of the value function, \ie the gradient of the critic \wrt the action performed by the actor, is employed during policy optimization.
Specifically, the policy gradient is obtained through the computation of the \emph{action-value gradient}, by chaining the actor's Jacobian with the action-gradient of the critic.

Learning the critic \emph{by value} rather than \emph{by action-gradient} of the value relies on hazy smoothness assumptions on the real value function~\cite{silver2014deterministic}.
This means that, in conventional temporal difference learning, the critic learns action-value gradients \emph{implicitly}, which could harm the performance of a deterministic policy gradient algorithm.

In this paper, we propose Model-based Action-Gradient-Estimator Policy Optimization (MAGE), a continuos-control deterministic-policy actor-critic algorithm that \emph{explicitly} trains the critic to provide accurate action-gradients for the use in the policy improvement step.
Motivated by both the theory on \emph{Deterministic Policy Gradients}~\cite{silver2014deterministic} and practical considerations, MAGE uses temporal difference methods to minimize the error on the action-value gradient.
For this, the algorithm leverages a trained dynamics model as a proxy for a differentiable environment and techniques reminiscent of double backpropagation~\cite{drucker1992improving}.
On a challenging continuous control benchmark~\cite{brockman2016gym,todorov2012mujoco}, we show that MAGE is significantly more sample-efficient than state-of-the-art model-free and model-based baselines.

The rest of the paper is organized as follows.
In Section~\ref{sec:background}, we provide the notation and background on deterministic policy gradients.
Our algorithm, together with its theoretical motivation, is introduced in Section~\ref{sec:algorithm}, followed by empirical results in Section~\ref{sec:experiments}.
In Section~\ref{sec:relatedWork}, we present some of the related work and its relationship with our approach.

\section{Background}\label{sec:background}
\subsection{Preliminaries}
Consider a discrete-time Markov Decision Process~\cite{Puterman1994MarkovDP} (MDP), defined as ${\mathcal{M} = \left( \mathcal{S}, \mathcal{A}, p, r, \gamma, \mu \right)}$, where $\mathcal{S}$ is the space of possible states, $\mathcal{A}$ is the space of possible actions, $p: \mathcal{S} \times \mathcal{A} \rightarrow \Delta \left( \mathcal{S} \right)$ is the transition model, $r: \mathcal{S} \times \mathcal{A} \rightarrow \mathbb{R}$ is the known and differentiable reward function, $\gamma$ is the discount factor, $\mu \in \Delta \left( \mathcal{S} \right)$ is the initial state distribution.
The behavior of the agent is described by a deterministic policy $\pi_{\vtheta}: \mathcal{S} \rightarrow \mathcal{A}$, belonging to a parametric space of policies $\Pi = \{ \pi_\vtheta :\vtheta \in \Theta \subseteq \mathbb{R}^n \}$, for which we will occasionally omit the parameter subscript.
Let $d_\mu^{\pi}$ be the $\gamma$-discounted state distribution induced by policy $\pi_{\vtheta}$, defined as $d_\mu^\pi(s) = (1 - \gamma) \sum_{t=0}^\infty \gamma^t \Pr (s_t = s | \pi, \mu)$.
The total reward collected by an agent is quantified with action-value function $Q^\pi(s,a) = \E \left[ \sum_{t=0}^{\infty} \gamma^t r(s_t, a_t) | s_0 = s, a_0 = a\right]$ and performance function $J(\vtheta)= \E_{s \sim \mu} \left[ Q^\pi(s,\pi_{\vtheta}(s)) \right]$.

Practical algorithms can employ an approximate action-value function $\widehat{Q}$ and an approximate dynamics model~$\widehat{p}$, which, most commonly, are parametric function approximators specified by the spaces $\mathcal{Q} = \{ Q_\vphi :\vphi \in \Phi \subseteq \mathbb{R}^h \}$ and $\mathcal{P} = \{ p_\vomega :\vomega \in \Omega \subseteq \mathbb{R}^k \}$.

\subsection{Deterministic Policy Gradients and TD-learning}
Policy gradient methods improve the policy $\pi_\vtheta$ by ascending the direction of the gradient of its performance function $J(\vtheta)$.
The \emph{Deterministic Policy Gradient Theorem}~\cite{silver2014deterministic} provides a practical way to calculate this gradient. It shows that, under some mild regularity conditions on the MDP, the gradient of the performance of a deterministic policy $\pi_{\vtheta}$ is given by:
\begin{equation}
    \nabla_{\vtheta} J(\vtheta) = \frac{1}{1 - \gamma} \ints d_\mu^\pi(s)  \nabla_a Q^\pi(s,a) \big|_{a = \pi_{\vtheta}(s)} \nabla_{\vtheta} \pi_{\vtheta} (s) \d s.
\end{equation}
This result can be interpreted through the lens of the chain rule applied to the \emph{action-value gradient} $\nabla_{\vtheta} Q^\pi$: the policy gradient does not directly depend on the gradient of $d_\mu^\pi$, and can be obtained by just chaining the actor's Jacobian $\nabla_{\vtheta} \pi_{\vtheta}$ with the \emph{action-gradient} of the value function $\nabla_a Q^\pi$.

The theorem motivates a family of policy gradient actor-critic algorithms, such as DDPG \cite{lillicrap2015continuous} and TD3 \cite{fujimoto2018addressing}.
Similarly to the classical policy iteration~\cite{sutton2018reinforcement}, the evaluation of a policy $\pi \in \Pi$ (called \emph{actor} in this context) is interleaved with its improvement w.r.t the approximate action-value function $\widehat{Q} \in \mathcal{Q}$ (called \emph{critic}).
Specifically, the typical desideratum consists in finding a critic $\widehat{Q}$ which minimizes the \emph{policy evaluation error}:
\begin{equation}
\widehat{Q} \in \argmin_{\widetilde{Q} \in \mathcal{Q}} \E_{s \sim d_\mu^\pi} \left| \delta^{\pi, \widetilde{Q}}(s,\pi(s)) \right|,
\end{equation}
where $\delta^{\pi,\widetilde{Q}}(s,a) = Q^\pi(s,a) - \widetilde{Q}(s,a)$ is a deviation w.r.t the true state-action value.
Given the lack of knowledge about the transition model, $Q^\pi$ needs to be approximated.
A common approximation technique consists in employing the \emph{temporal difference} (TD) \emph{error}~\cite{sutton1988learning}, defined as ${\widehat{\delta}^{\pi,\widetilde{Q}}(s,a,s') = r(s,a) + \gamma \widetilde{Q}(s',\pi(s')) - \widetilde{Q}(s, a)}$, giving rise to a bootstrapped optimization criterion for $\widehat{Q}$:
\begin{equation}\label{eq:tderror}
    \widehat{Q} \in \argmin_{\widetilde{Q} \in \mathcal{Q}} \E_{\substack{\text{$s \sim d_\mu^\pi$}\\\text{$s' \sim p(\cdot|s,\pi(s))$}}} \left| \widehat{\delta}^{\pi,\widetilde{Q}}(s, \pi(s), s') \right|.
\end{equation}
Minimizing the TD-error, albeit under rather strong assumptions, enjoys convergence guarantees~\cite{sutton2018reinforcement,tsitsiklis1997analysis}.
Once a critic is learned, the actor $\pi_{\vtheta}$ can be improved by maximizing the action-value function for actions produced by the policy:
\begin{equation}\label{eq:argmax_dpg}
    \pi_{\vtheta} \in \argmax_{\widetilde{\pi}_{\vtheta} \in \Pi_{\Theta}} \E_{s \sim d_\mu^\pi} \left[ \widehat{Q}(s, \widetilde{\pi}_{\vtheta}(s)) \right].
\end{equation}
The above can be seen as a generalization of the policy improvement step in classical policy iteration, which relies on maximization over a discrete action space that cannot be easily carried out in continuous spaces.
In practice, to reduce the computational burden, the problems in Equation~\ref{eq:tderror} and Equation~\ref{eq:argmax_dpg} are solved only partially (\eg by using a single optimization step) at each iteration, similarly to generalized policy iteration~\cite{sutton2018reinforcement}.

\section{Learning Action-Value Gradients}\label{sec:algorithm}
In this section, we discuss theoretically how to learn a useful critic in the context of deterministic policy gradients.
Then, we make the theoretical insights concrete and, guided by practical considerations, present \emph{Model-based Action-Gradient-Estimator Policy Optimization} (MAGE), a novel policy optimization algorithm.

\subsection{How to Learn a Useful Critic?}
An actor can only be as good as allowed by its critic.
Thus, obtaining an \emph{effective} critic is one of the most crucial passages for any actor-critic algorithm.
In the previous section, we outlined the most common method to train the critic, consisting in the minimization of the temporal difference error.
However, when the learned action-value function will not be perfect, as common in policy optimization with function approximation, minimizing the TD-error does not guarantee that the critic will be effective for the goal of solving the control problem.
Instead, the following result provides foundations for a more grounded objective function for critic learning.
\begin{restatable}{proposition}{gradientLossBound}
\label{th:gradient_loss_bound}
Let $\Pi$ be a parametric space of $L_\pi$-Lipschitz continuous differentiable deterministic policies, $\mathcal{Q}$ a space of approximate value functions and $\| \cdot \|$ any $p$-norm. Given $\pi \in \Pi$ and $\widehat{Q} \in \mathcal{Q}$, the norm of the difference between the true policy gradient $\nabla_{\vtheta} J(\vtheta)$ and its approximation $\widehat{\nabla}_{\vtheta} J (\vtheta)$, which uses $\widehat{Q}$, can be upper bounded as:
\begin{equation*}\label{prop:normgrad}
    \| \nabla_{\vtheta} J (\vtheta) - \widehat{\nabla}_{\vtheta} J (\vtheta) \|  \leq  \frac{L_\pi}{1 - \gamma}  \E_{s \sim d_\mu^\pi} \left\| \nabla_a \delta^{\pi,\widehat{Q}}(s,a) \Big|_{a=\pi(s)}  \right\|.
\end{equation*}
\end{restatable}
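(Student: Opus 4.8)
The plan is to express both gradients through the Deterministic Policy Gradient Theorem and reduce their difference to a single integral involving the action-gradient of the error $\delta^{\pi,\widehat{Q}}$. First I would write the true policy gradient via the DPG theorem and define the approximate gradient $\widehat{\nabla}_{\vtheta} J(\vtheta)$ as the very same expression with the learned critic $\widehat{Q}$ substituted for $Q^\pi$ — this is precisely the quantity that deterministic actor-critic methods compute in the improvement step of Equation~\ref{eq:argmax_dpg}. Subtracting the two and using linearity of the gradient operator $\nabla_a$, the integrands combine into $\nabla_a Q^\pi - \nabla_a \widehat{Q} = \nabla_a(Q^\pi - \widehat{Q}) = \nabla_a \delta^{\pi,\widehat{Q}}$, yielding
\[
\nabla_{\vtheta} J(\vtheta) - \widehat{\nabla}_{\vtheta} J(\vtheta) = \frac{1}{1-\gamma}\ints d_\mu^\pi(s)\, \nabla_a \delta^{\pi,\widehat{Q}}(s,a)\big|_{a=\pi(s)}\, \nabla_{\vtheta} \pi_{\vtheta}(s)\,\d s.
\]

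Second, I would take the chosen $p$-norm of both sides and push it inside the integral. Because $d_\mu^\pi(s)\ge 0$, the triangle inequality for integrals (equivalently, Jensen's inequality applied to the convex norm) gives the upper bound $\tfrac{1}{1-\gamma}\ints d_\mu^\pi(s)\,\|\nabla_a\delta^{\pi,\widehat{Q}}\,\nabla_{\vtheta}\pi_{\vtheta}(s)\|\,\d s$. I would then factor the integrand using sub-multiplicativity of the matrix operator norm induced by $\|\cdot\|$, namely $\|\nabla_a\delta^{\pi,\widehat{Q}}\,\nabla_{\vtheta}\pi_{\vtheta}(s)\|\le \|\nabla_{\vtheta}\pi_{\vtheta}(s)\|_{\mathrm{op}}\,\|\nabla_a\delta^{\pi,\widehat{Q}}\|$, and invoke the $L_\pi$-Lipschitz hypothesis to conclude $\|\nabla_{\vtheta}\pi_{\vtheta}(s)\|_{\mathrm{op}}\le L_\pi$ uniformly in $s$. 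Pulling the constant $L_\pi$ out and recognizing that $d_\mu^\pi$ is a normalized density (so the integral is an expectation over $s\sim d_\mu^\pi$) produces exactly the claimed bound.

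The main obstacle is the norm-factoring step together with correctly interpreting the Lipschitz hypothesis. One must be explicit that $L_\pi$ bounds the operator norm of the parameter-Jacobian $\nabla_{\vtheta}\pi_{\vtheta}(s)$ — i.e. Lipschitzness of the map $\vtheta \mapsto \pi_{\vtheta}(s)$ rather than Lipschitzness in the state — and that the vector $p$-norm and its induced matrix operator norm are compatible so that sub-multiplicativity holds for the specific norm appearing in the statement. Everything else, namely the DPG substitution that defines $\widehat{\nabla}_{\vtheta} J$ as the plug-in expression and the passage from a $d_\mu^\pi$-weighted integral to an expectation, is routine once those conventions are fixed.
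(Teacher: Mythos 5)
Your proof is correct and follows essentially the same route as the paper's: plug the Deterministic Policy Gradient Theorem into both gradients, combine the integrands via linearity into $\nabla_a \delta^{\pi,\widehat{Q}}$, push the norm inside the integral, factor by norm compatibility, and bound the policy Jacobian by $L_\pi$. If anything, you are slightly more careful than the paper, which silently assumes the vector-norm/operator-norm compatibility and the parameter-Jacobian reading of the Lipschitz hypothesis that you make explicit.
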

The proposition (see Appendix~\ref{apx:proofs} for the proof) is a direct consequence of the \emph{Deterministic Policy Gradient Theorem} and is thus valid when deterministic policies are employed. The Lipschitz assumption for $\pi$ is easily satisfied for many policy classes of practical use, \eg neural networks~\cite{fazlyab2019efficient}.

Proposition~\ref{th:gradient_loss_bound} suggests that it is the norm of the action-gradient of the policy evaluation error instead of its value that should be minimized to reduce the bias introduced by the use of the approximate value function~$\widehat{Q}$.
To minimize the bound, a proxy for the unknown $Q^\pi$ is needed. To this aim, it is possible to follow the approach of traditional TD-learning, substituting the evaluation error $\delta^{\pi, \widehat{Q}}$ with the TD-error $\widehat{\delta}^{\pi, \widehat{Q}}$. This leads to the following optimization problem:
\begin{equation}
    \widehat{Q} \in \argmin_{\widetilde{Q} \in \mathcal{Q}} \E_{\substack{\text{$s \sim d_\mu^\pi$}\\\text{$s' \sim p(\cdot|s,\pi(s))$}}} \left\| \nabla_a \widehat{\delta}^{\pi,\widetilde{Q}}(s, \pi(s), s') \right\|.
\end{equation}
Notice that computing the gradient \wrt the action of the TD-error $\widehat{\delta}^{\pi,\widehat{Q}}$ requires taking into account the effect of action $a$ on the transition to the subsequent state in the environment $s'$, i.e., backpropagating through the environment dynamics $p$.
Since $p$ is not available in typical RL settings, especially in a differentiable form, it needs to be substituted with an approximate model $\widehat{p}$, as commonly done in model-based RL~\cite{deisenroth2013survey,janner2019trust,Chua2018DeepRL}.
An environment model gives rise to imaginary transitions $(s,\pi(s),\widehat{s})$, where $\widehat{s} \sim \widehat{p}(\cdot|s,\pi(s))$.
Given differentiable model, policy, and action-value function, the action-gradient can be effectively computed by leveraging standard automatic differentiation tools~\cite{baydin2018autodiff}.
The corresponding computational graph is depicted in Figure~\ref{fig:computational_graph2}.
This leads to a viable way to obtain $\widehat{Q}$:
\begin{equation}\label{eq:tdgerror}
    \widehat{Q} \in \argmin_{\widetilde{Q} \in \mathcal{Q}} \E_{\substack{\text{$s \sim d_\mu^\pi$}\\\text{$\widehat{s} \sim \widehat{p}(\cdot|s,\pi(s))$}}} \left\| \nabla_a \widehat{\delta}^{\pi,\widetilde{Q}}(s, \pi(s), \widehat{s}) \right\|.
\end{equation}
Even in the general case of a stochastic model, differentiating through the resulting computations is still possible for many commonly used model classes via the reparametrization trick~\cite{heess2015learning}.
Using an approximate model $\widehat{p}$ implies a tradeoff, since additional bias is injected into the estimation of the critic.
Nonetheless, the use of $\widehat{p}$ is the most direct way to solve the optimization problem in Equation~\ref{eq:tdgerror} and to obtain a $\widehat{Q}$ that provides a more accurate policy gradient \wrt the typical critic.

\begin{figure}
    \centering
    \includegraphics[width=\textwidth]{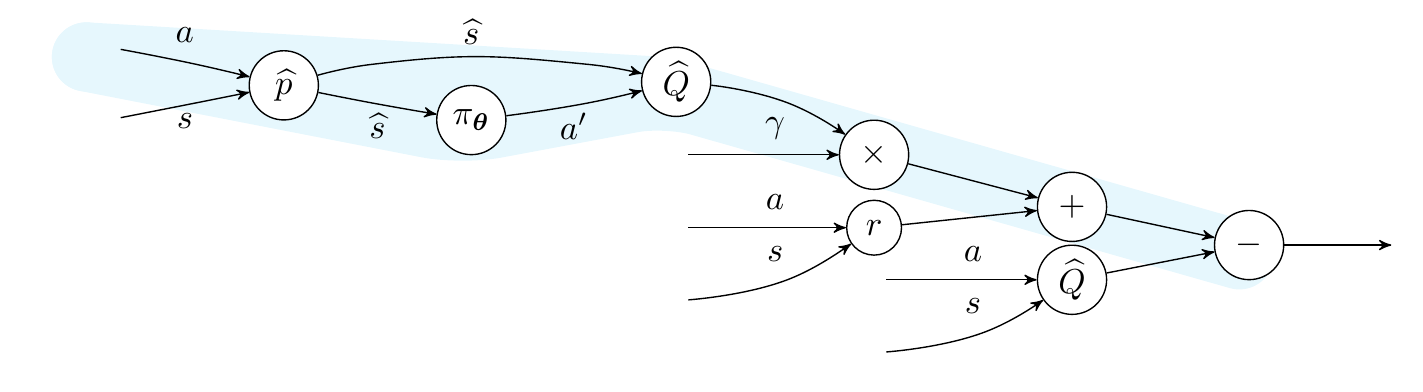}
\caption{Graph describing the computation of $\widehat{\delta}^{\pi_{\vtheta},\widehat{Q}}$, when using policy $\pi_{\vtheta}$, model $\widehat{p}$, action-value function $\widehat{Q}$. Nodes and edges represent functions and variables, respectively. To compute $\nabla_a \widehat{\delta}^{\pi_{\vtheta},\widehat{Q}}$, all the paths from the output back to $a$ must be considered, including the one highlighted in cyan, which involves the environment dynamics. Therefore, an approximate differentiable model $\widehat{p}$ needs to be learned in order to make all the required paths accessible.}
    \label{fig:computational_graph2}
\end{figure}

\subsection{Model-based Action-Gradient-Estimator Policy Optimization}
The outlined procedure for learning the value function requires an approximate model $p_{\omega}$, thus naturally suggesting its integration into a \mbox{model-based} policy optimization framework.
A model-based actor-critic method involves three steps during each iteration: learning the model $p_{\vomega}$, updating the action-value function $Q_{\vphi}$ and improving the policy $\pi_{\vtheta}$.
In the following, we consider neural networks as function approximators to represent the three modules, although any class of differentiable models could be leveraged.
Our approach is inspired by Dyna~\cite{sutton1991dyna}, and employs an approximate dynamics model for generating 1-step imaginary on-policy transitions starting from observed states stored in a replay buffer.
Those transitions are then employed to learn $Q_{\vphi}$, and, in turn, leveraged for computing an improvement direction for the parameters of the policy $\pi_{\vtheta}$.

In preliminary experiments, we found that directly solving the minimization problem in Equation~\ref{eq:tdgerror} is hard in practice.
During the optimization, the parameters are prone to be trapped in local-minima, which leads to degenerate solutions.
A demonstration of this phenomenon is detailed in Appendix~\ref{apx:degenerate}.
The root cause of this effect is unknown and suggests the existence of a tradeoff between the easier minimization of the TD-error and the more theoretically grounded minimization of its action-gradient.

We propose as a remedy the introduction of a constraint into the optimization problem.
We argue that, among the possible solutions, a natural one is constraining the optimization landscape by bounding the traditional TD-error (see Equation~\ref{eq:tderror}), and thus solving the following optimization problem:
\begin{equation}
\begin{gathered}\label{eq:constrained_problem}
        \min_{\widetilde{\vphi} \in \Phi} \E_{\substack{\text{$s \sim d_\mu^\pi$}\\\text{$\widehat{s} \sim p_{\vomega}(\cdot|s,\pi(s))$}}}\left\| \nabla_a \widehat{\delta}^{\pi, Q_{\widetilde{\vphi}}}(s,a,\widehat{s})\Big|_{a=\pi(s)}  \right\| \\
        \mathrm{s.t. } \E_{\substack{\text{$s \sim d_\mu^\pi$}\\\text{$\widehat{s} \sim p_{\vomega}(\cdot|s,\pi(s))$}}} \left| \widehat{\delta}^{\pi, Q_{\widetilde{\vphi}}}(s,\pi(s),\widehat{s}) \right| \leq \lambda.
\end{gathered}
\end{equation}
As the above expressions already require non-trivial gradient computations, we avoid the use of complex and expensive methods for nonlinear programming.
Instead, we resort to \emph{penalty function methods}~\cite{smith1995penalty} by regularizing the original objective by using the TD-error.
A similar approach has been used in the past in, e.g., Proximal Policy Optimization (PPO,  \cite{schulman2017proximal}) to approximately solve different constrained optimization problems.

Eventually, the parameters of $Q_{\vphi}$ are learned by descending the gradient
\begin{equation}\label{eq:regularization}
\nabla_{\vphi} \mathcal{L}(s,a,\widehat{s};\vphi, \vtheta, \vomega) = \nabla_{\vphi} \left( \left\| \nabla_a \widehat{\delta}^{\pi_{\vtheta},Q_{\vphi}}(s, a, \widehat{s}) \big|_{a=\pi_{\vtheta}(s)} \right\| + \lambda \left| \widehat{\delta}^{\pi_{\vtheta},Q_{\vphi}}(s, a, \widehat{s}) \right|  \right)
\end{equation}
on an imaginary transition $(s,a,\widehat{s})$.
This expression requires computing second-order gradients, which would be computationally expensive if computed \wrt to the high-dimensional space of parameters $\Phi$ of the $Q$-function. Here, however, the optimization is affordable since the gradients are computed \wrt, typically low dimensional, actions.
Notice also that the computational overhead of the second term in Equation~\ref{eq:regularization} is minimal, since evaluating the TD-error $\widehat{\delta}^{\pi, Q_{\vphi}}(s,a,\widehat{s})$ is anyway, when using automatic differentiation, required to compute its gradient.

\begin{algorithm}[t]
\small
\caption{Model-based Action-Gradient-Estimator Policy Optimization (MAGE)}
\label{alg:MAGE}
\hspace*{\algorithmicindent} \textbf{Input:} Initial buffer $\mathcal{B}$, set of parameter vectors $\left\{ \vomega, \vphi, \vtheta \right\}$
\begin{algorithmic}
\For{each iteration}
\State Collect transition $(s,a,s')$ acting according to exploratory version of $\pi_{\vtheta}$
\State $\mathcal{B} \gets \mathcal{B} \cup \left\{ (s,a,s') \right\}$
\For{each model learning step}
\State $\vomega \gets \vomega - \alpha_{p} \nabla_{\vomega} \ell(s,a,s'; \vomega), \qquad (s,a,s') \sim \mathcal{B}$
\EndFor
\For{each policy optimization step}
\State Extract state $s$ after sampling $(s,\cdot,\cdot) \sim \mathcal{B}$
\State $\Bar{\vphi} \gets \vphi$
\State $\widehat{\delta}(s,a,\widehat{s};\vphi) \gets r(s,a) + \gamma Q_{\Bar{\vphi}}(\widehat{s}, \pi_{\vtheta}(\widehat{s})) - Q_{\vphi}(s, a)$, \qquad $a=\pi_{\vtheta}(s), \, \widehat{s} \sim p_{\vomega}(\cdot|s,a)$
\State $\vphi \gets \vphi - \alpha_{Q} \nabla_{\vphi} \left( \left\| \nabla_a \widehat{\delta}(s, a, \widehat{s};\vphi) \big|_{a=\pi_{\vtheta}(s)} \right\| + \lambda \left| \widehat{\delta}(s, a, \widehat{s};\vphi) \right|  \right)$
\State $\vtheta \gets \vtheta + \alpha_{\pi} \nabla_\vtheta Q_{\vphi}(s,\pi_{\vtheta}(s))$
\EndFor
\EndFor
\end{algorithmic}
\end{algorithm}
We plug our critic training method into a model-based Dyna-like algorithm, giving rise to \emph{Model-based Action-Gradient-Estimator Policy Optimization} (MAGE), which is presented\footnote{For simplicity of presentation, an abstract version of MAGE is considered in Algorithm~\ref{alg:MAGE}. Any actor-critic algorithm  with deterministic actor can be then used to instantiate MAGE into a practical incarnation.}  in Algorithm~\ref{alg:MAGE}.
At each iteration, the dynamics model $p_{\vomega}$ is trained to maximize the likelihood of the transitions stored in the experience replay buffer $\mathcal{B}$, or, equivalently, to minimize an appropriate loss function $\ell$:
\begin{equation}
    \vomega \in \argmin_{\widetilde{\vomega} \in \Omega} \E_{(s,a,s') \sim \mathcal{B}} \left[ \ell(s,a,s'; \widetilde{\vomega}) \right].
\end{equation}
Then, for one or more steps, the TD-error for the current policy and action-value function is computed, and used together with its action-gradient to update $Q_{\vphi}$, which in turn is leveraged to improve $\pi_{\vtheta}$.

\section{Experiments}\label{sec:experiments}
\subsection{Sample-Efficient Continuous Control with MAGE}
\paragraph{Algorithm settings}
The general structure of MAGE is compatible with many actor-critic algorithms with deterministic policies.
In this experiment, we employ TD3~\cite{fujimoto2018addressing}, a popular, state-of-the-art extension to DDPG~\cite{lillicrap2015continuous}, as a base policy optimization method.
This amounts to the addition of target policy smoothing, delayed policy updates, clipped double Q-learning and target functions.
We call this version of our algorithm MAGE-TD3\footnote{The PyTorch~\cite{paszke2019pytorch} implementation, based on~\cite{shyam2019model}, is available at \url{https://github.com/nnaisense/MAGE}.}.
After each step of environment interaction, we add the collected transition in the replay buffer $\mathcal{B}$, train the approximate model $p_{\vomega}$, and update critic and actor $10$ times.
We employ a single value of $\lambda=0.2$ for all the environments, since we found MAGE to be reasonably robust to the choice of this hyperparameter (see Appendix~\ref{apx:additional_experiments}).
In order to reduce the impact of model bias, MAGE leverages an ensemble of $8$ probabilistic Gaussian-output models, trained by maximum likelihood estimation.
\begin{figure}[t]
    \centering
    \includegraphics[width=\textwidth]{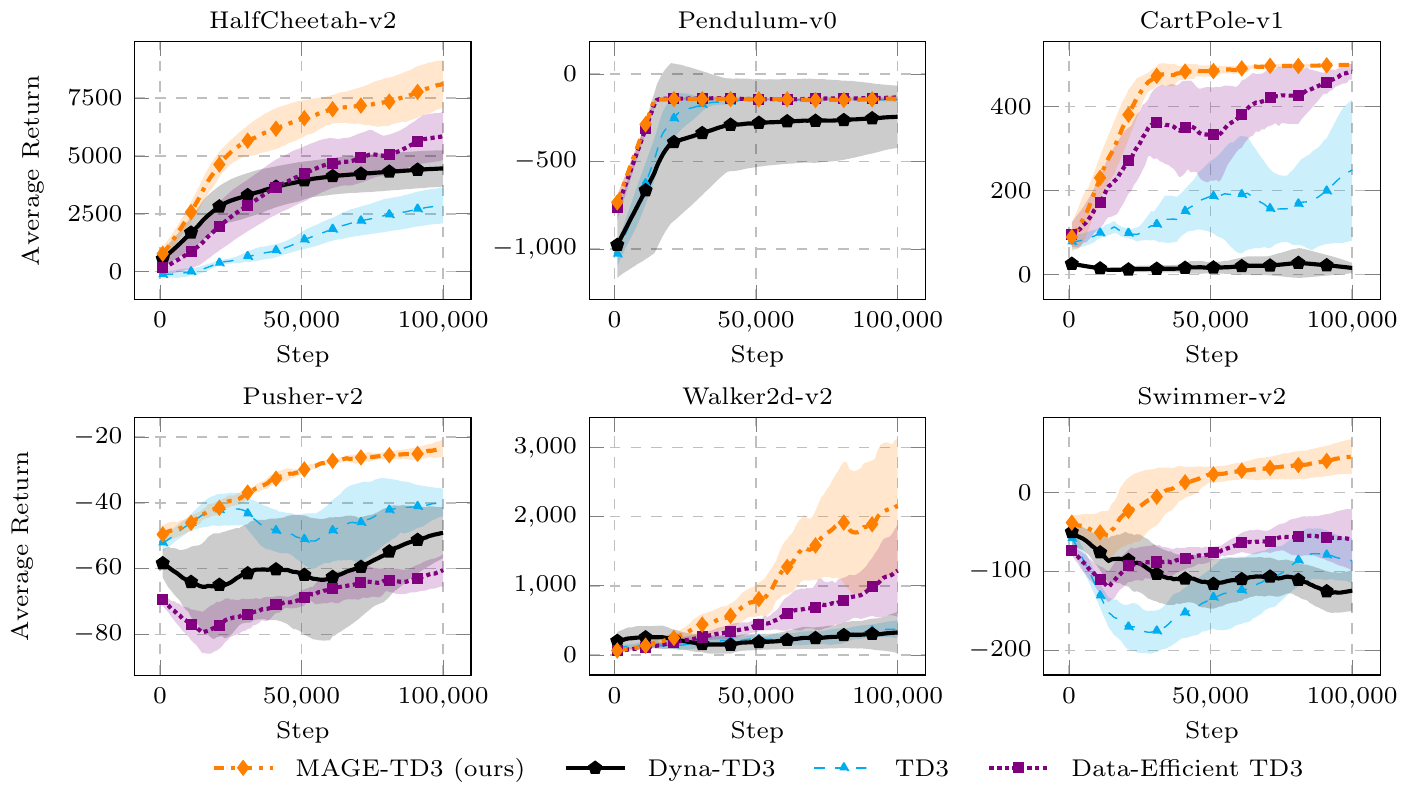}
\caption{Performance in terms of average return of MAGE on continuous control benchmarks. MAGE compares favorably to the three baselines on all the environments (5 runs, $95\%$ c.i.).\label{fig:performance}}
\end{figure}

\paragraph{Baselines and environments}
We consider one model-based and two model-free algorithms as baselines.
The first one is Dyna-TD3, which uses a classical TD-error loss, otherwise being identical to MAGE-TD3.
It resembles 1-step horizon \emph{Model-based Policy Optimization} (MBPO~\cite{janner2019trust}), but uses a deterministic policy optimized by TD3.
Apart from that, we compared MAGE against TD3 and its sample-efficient variant \cite{van2019use}, which employs multiple updates for each environment step and trades off computational efficiency and, potentially, stability~\cite{metelli2018policy} for sample efficiency.
Specifically, for a fair comparison with MAGE-TD3, we execute $10$ critic and actor updates after each interaction with the environment.
We employ environments from OpenAI Gym~\cite{brockman2016gym} and the MuJoCo physics simulator~\cite{todorov2012mujoco} as continuous control benchmarks, assuming, for all the environments, the availability of a differentiable reward function (we will later show that MAGE behaves well also in the absence of this assumption).
Additional details concerning the experimental setting are reported in Appendix~\ref{apx:details}.

\paragraph{Results}
Figure~\ref{fig:performance} shows the learning curves for the average return of all the approaches.
Since our primary interest is MAGE's sample efficiency, we show the first $10^5$ steps of environment interaction.
The results show that MAGE is able to learn at least as fast as all the baselines on all the environments, confirming the intuitive advantage of directly optimizing for the accuracy of the estimated action-value gradient.
Interestingly, no superiority of the vanilla Dyna-TD3 on its simple data-efficient version can be observed: this demonstrates that there is no intrinsic advantage in terms of sample-efficiency for model-based reinforcement learning, but it is instead highly environment- and algorithm-dependent.
On the other hand, increasing the number of offline updates for model-free algorithms can hurt performance in some environments, as it is the case, for instance, on the Pusher-v2 environment.
Note that, in contrast with Dyna-TD3, that only leverages the model as a generator for additional transitions \wrt the ones that can be obtained in the environment, MAGE makes deeper use of the learned model of the dynamics in order to unlock a peculiar learning modality that would be impossible in a model-free setting.
In Appendix~\ref{apx:additional_experiments}, we also show that MAGE-TD3 matches the asymptotic performance of its model-free counterpart.

\subsection{Understanding MAGE}\label{sec:understanding_mage}
\paragraph{Action-Gradient Estimation}
MAGE was designed to obtain a critic that is maximally useful for policy improvement by yielding accurate action-value gradients.
How much better does it predict them compared to the traditional TD-learning?
To investigate this question, we employ the Pendulum-v0 environment, using a differentiable oracle in place of the approximate dynamics model.
\begin{wrapfigure}{l}{0.4\textwidth}
	\centering
	\includegraphics[width=0.4\textwidth]{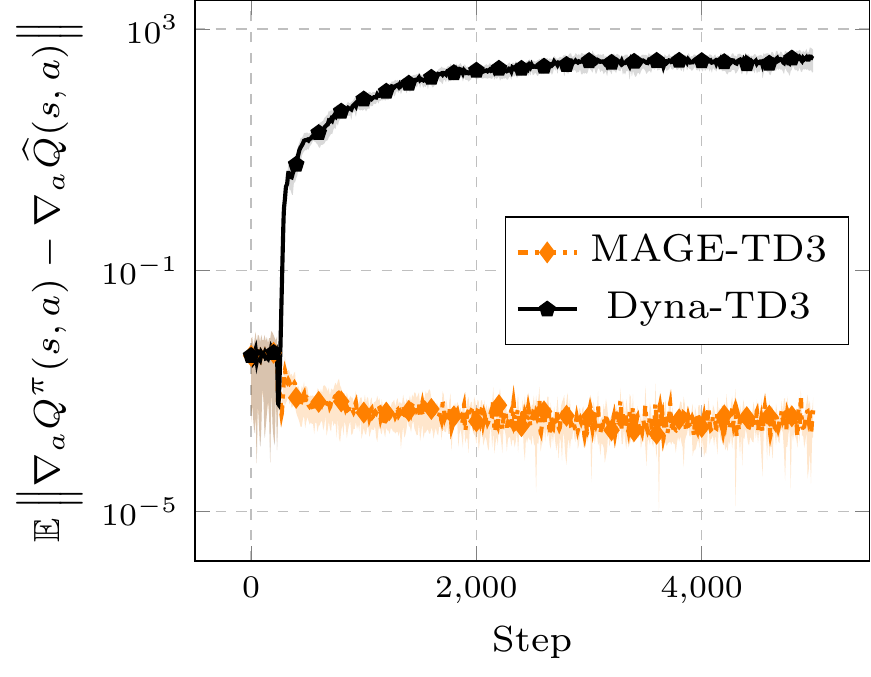}
	\caption{\label{fig:evaluation_exp} Error of critics in predicting $\nabla_a Q^\pi$ for a random $\pi$ (4, runs, 95\% c.i.). Notice the log scale on the Y axis.}
	\vspace{-0.45cm}
\end{wrapfigure}
We fix a randomly initialized actor, and train only its critic with both MAGE-TD3 and its Dyna counterpart.
During training, for each transition on a trajectory, we compute the true action-gradient as ${\nabla_a Q^\pi(s_t,a_t) = \nabla_a \sum_{t'=t}^{H-1} \gamma^{t'} r(s_{t'}, a_{t'}) |_{a_{t'} = \pi_{\vtheta}(s_{t'})}}$ and compare it to the action-gradient $\nabla_a \widehat{Q}$ provided by the learned critic.
The results, shown in Figure~\ref{fig:evaluation_exp}, indicate that the MAGE's critic progressively learns an accurate estimate of the action-gradient; by contrast, the one trained using traditional temporal difference completely fails in predicting it. The results undermine the common assumption that minimizing the TD-error yields also a minimization of the error on the gradients.
The difference can explain the superior sample efficiency of MAGE over classical TD-learning.
We believe the surprising observation that traditional approaches are able to learn a reasonably good policy even when the learned gradient is very different from the real one is in line with recent analyses on the mismatch between the empirical behavior of policy gradient approaches and their conceptual features~\cite{Ilyas2020A}.

\paragraph{Reward Availability}
Throughout the presentation and evaluation of MAGE, we assumed complete knowledge of the reward function $r$ of the underlying Markov Decision Process.
\begin{wrapfigure}{r}{0.4\textwidth}
    \centering
    \includegraphics[width=0.4\textwidth]{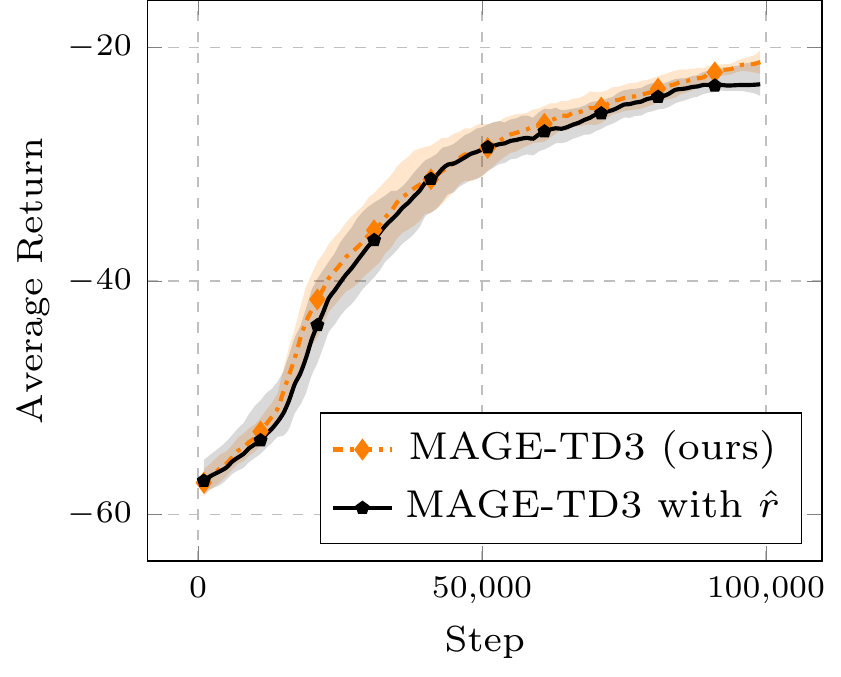}
    \caption{\label{fig:reward_exp_main} Performance of MAGE on Pusher-v2 using an estimated reward function $\hat{r}$ (5 runs, 95\% c.i.).}
    \vspace{-0.35cm}
\end{wrapfigure}
While this assumption is natural in many real-world settings~\cite{deisenroth2013survey} and thus commonly employed in other model-based reinforcement learning methods~\cite{Chua2018DeepRL,d2019gradient,heess2015learning}, its role is particularly crucial in our algorithm.
In traditional temporal difference learning, given a transition $(s,a,\widehat{s})$, the reward $r(s,a)$ constitutes the only grounding element in the objective function.
The reward function plays an even stronger role as a grounding element for bootstrapping in MAGE, since both its value $r(s,a)$ and its action-gradient $\nabla_a r(s,a)$ are needed: while the former can be usually observed in the environment, the latter can only be computed with complete knowledge of the underlying function.
In our experiments on the sample efficiency of MAGE, we employed the ground-truth reward function (with ground-truth gradients): a natural question is whether MAGE still performs reasonably well if an estimated reward function $\hat{r}$, learned from data, is used in place of the real $r$.
To answer this question, we evaluate a version of MAGE in which an approximate reward function $\hat{r}$ is learned by using a neural network approximator and minimizing the mean squared error on the rewards observed in the environment.
The results, perhaps surprising, are reported in Figure~\ref{fig:reward_exp_main} for the Pusher-v2 environment (see Appendix~\ref{apx:reward} for the complete results).
They show that, for the commonly employed continuous control benchmarks, the performance of our method is only minimally degraded by the use of an approximate reward function in place of the real one, thus suggesting inherent robustness to inaccurate evaluations of the reward function as well as its action-gradient.

\section{Related Work}\label{sec:relatedWork}
Policy gradients are among the most popular methods in reinforcement learning.
A variety of algorithms have been proposed for the estimation of the policy gradient, either involving only the policy~\cite{williams1992simple,baxter2001infinite,metelli2018policy} or also a value function~\cite{schulman2015trust,mnih2016asynchronous,schulman2017proximal}.
The latter category of algorithms is reffered to as \textit{actor-critic} methods~\cite{prokhorov1997adaptive,konda2000actor}.
Among them, the ones based on the \emph{Deterministic Policy Gradient}~\cite{silver2014deterministic,lillicrap2015continuous} leverage the action-gradient of the critic.
When using function approximation, the quality of the learned critic is of paramount importance~\cite{barth-maron2018distributional}: for instance, enforcing on the critic the \emph{compatiblity conditions}~\cite{silver2014deterministic} ensures an unbiased estimate of the policy gradient.

Developed around such conditions, GProp~\cite{balduzzi2015compatible} is, to the best of our knowledge, the only method that explicitly optimizes for the accuracy of the learned action-value gradient.
It is significantly different \wrt MAGE, being model-free and based on gradient estimation via noisy perturbations together with an additional deviator network.
Importantly, while GProp's deviator network is a function approximator that outputs an estimate for the action-gradient, recent theoretical~\cite{saremi2019approximating} and practical~\cite{saremi2019neural} insights outside of RL suggest that learning the action-gradient by second-order differentiation, as we propose in MAGE, is not only simpler to implement \wrt GProp and similar procedures~\cite{weber2019credit}, but also fundamentally more effective when using neural network approximators.

The technique we use for learning the action-gradient relies on the differentiation of the TD-error and, thus, of the Bellman equation.
This is related to a broad class of methods called \emph{value gradients}~\cite{schmidhuber1990making,fairbank2014value,heess2015learning,clavera2020modelaugmented}, in which the policy is improved by backpropagating through the unrolled Bellman equation.
Those approaches, however, learn the value function by standard temporal difference~\cite{heess2015learning}.
 Another classical method, named \emph{Dual Heuristic Programming} (DHP)~\cite{werbosdhdp,prokhorov1997adaptive,Fairbank2008ReinforcementLB}, learns the gradient of the state-value function in a model-based setting, leading to a TD-learning procedure that resembles our approach.
 However, DHP has the main goal of improving generalization of the value function and exploration, and is fundamentally different from MAGE, that aims at learning an accurate action-gradient of the critic and is motivated by the Deterministic Policy Gradient Theorem.

More broadly, inside and outside of reinforcement learning, several algorithms incorporate gradient penalties into the loss function used for training a neural network.
This technique, known as \emph{double backpropagation}~\cite{drucker1992improving}, has been employed in a number of applications, for instance increasing generalization capabilities~\cite{drucker1992improving,Rifaiicml}, enforcing Lipschitz constants \cite{gulrajani2017improved,lunz2018adversarial,gelada2019deepmdp}, or encouraging robustness to adversarial examples~\cite{simongabriel2019}.
Particularly related to our approach is \emph{Sobolev training}~\cite{czarnecki2017sobolev}, which leverages the availability of the derivatives of a target function to explicitly try to learn both value and gradient of it during supervised training; in our case, no ground-truth gradient is available and we use the action-gradient of the TD-target as a proxy.

Our method learns the action-gradient in the context of model-based policy optimization~\cite{deisenroth2013survey,wang2003model,tangkaratt2014model}.
We build upon the classical Dyna framework~\cite{sutton1991dyna,kalweit2017uncertainty}, in which a learned model is used for generating imaginary transitions, then employed for training a value function.
Our algorithm, which learns a Q-function from model-generated data but only optimizes the policy by using real data, is related to the approaches that compute the policy gradient by using a model-based value function together with trajectories sampled in the environment~\cite{abbeel2006using,d2019gradient,heess2015learning,janner2019trust}.
In practice, we leverage an ensemble of models, which has been shown to improve performance in a variety of contexts~\cite{kurutach2018modelensemble,Chua2018DeepRL,janner2019trust}.

Finally, our work is related in spirit to \emph{decision-aware model learning} (DAML)~\cite{joseph2013reinforcement,farahmand2017value,d2019gradient}.
In DAML approaches, the model of the dynamics of the environment is learned by explicitly considering how it will be used for improving the control policy: this is the same rationale behind the learning objective used in MAGE for the critic, focused on how it will be useful for policy optimization, and not merely on how it will be similar to the true value function.

\section{Conclusion}
In this paper, we presented MAGE, a model-based actor-critic algorithm with deterministic actor, which leverages an approximate dynamics model to directly learn the action-value gradient via temporal difference learning.
MAGE employs second-order differentiation to obtain a critic tailored for policy improvement.
The empirical evaluation of MAGE demonstrated its superiority over model-based and model-free baselines on challenging high-dimensional continuous control tasks.

A limitation of our method is of computational nature: in addition to the cost of model learning paid also by other model-based actor-critic algorithms, we incur the expense of computing a second-order gradient in order to train the critic, in result, approximately doubling the training time in comparison to the Dyna-based policy gradient approach.
This can potentially be alleviated by the development of more efficient automatic differentiation tools, which is, currently, an active area of research~\cite{baydin2018autodiff}.

While it is often hard to determine the circumstances under which the addition of an approximate model to a model-free algorithm is beneficial~\cite{janner2019trust}, we have shown that model-based techniques, such as MAGE's gradient-learning procedure, can unlock novel learning modalities, otherwise inaccessible.
This can actually be the true power of model-based reinforcement learning.
Therefore, apart from improving MAGE (\eg by investigating the unconstrained critic learning problem) and generalizing it (\eg to value gradients with real trajectories~\cite{heess2015learning} or multi-steps methods~\cite{feinberg2018model}), we hope that future work will reveal other innovative learning schemes that are infeasible in model-free settings.

\section*{Broader Impact}
The method presented in this paper is a reinforcement learning algorithm that can be used to control a system executing real-valued actions in an environment.
Therefore, a natural application of it is in robotics, with positive (\eg elderly care, resource-efficiency in manufacturing) and negative (\eg military) applications.
Alongside other deep reinforcement learning algorithms, our method is computationally intensive and its training can thus require considerable resources (\ie hardware and electricity); on the other hand, given that in many real-world scenarios every interaction with a system implies an economic or environmental cost, the sample efficiency of MAGE is aligned with the modern principles of responsible artificial intelligence.

\begin{ack}
The authors are grateful to Miroslav Štrupl for discussions about the relationship between action-gradients and TD-learning, David Alvarez for co-authoring the reinforcement learning framework used for the experiments, Christian Osendorfer, Miroslav Štrupl, Jan Koutník for their valuable feedback on an early draft of this manuscript, and to everyone at NNAISENSE for contributing to an inspiring research environment.
\end{ack}

\bibliographystyle{plain}
\bibliography{gradient_critic}

\begin{thebibliography}{10}

\bibitem{abbeel2006using}
Pieter Abbeel, Morgan Quigley, and Andrew~Y Ng.
\newblock Using inaccurate models in reinforcement learning.
\newblock In {\em Proceedings of the 23rd international conference on Machine
  learning}, pages 1--8. ACM, 2006.

\bibitem{balduzzi2015compatible}
David Balduzzi and Muhammad Ghifary.
\newblock Compatible value gradients for reinforcement learning of continuous
  deep policies.
\newblock {\em arXiv preprint arXiv:1509.03005}, 2015.

\bibitem{barth-maron2018distributional}
Gabriel Barth-Maron, Matthew~W. Hoffman, David Budden, Will Dabney, Dan Horgan,
  Dhruva TB, Alistair Muldal, Nicolas Heess, and Timothy Lillicrap.
\newblock Distributional policy gradients.
\newblock In {\em International Conference on Learning Representations}, 2018.

\bibitem{baxter2001infinite}
Jonathan Baxter and Peter~L Bartlett.
\newblock Infinite-horizon policy-gradient estimation.
\newblock {\em Journal of Artificial Intelligence Research}, 15:319--350, 2001.

\bibitem{baydin2018autodiff}
Atilim~Gunes Baydin, Barak~A. Pearlmutter, Alexey~Andreyevich Radul, and
  Jeffrey~Mark Siskind.
\newblock Automatic differentiation in machine learning: a survey.
\newblock {\em Journal of Machine Learning Research}, 18(153):1--43, 2018.

\bibitem{brockman2016gym}
Greg Brockman, Vicki Cheung, Ludwig Pettersson, Jonas Schneider, John Schulman,
  Jie Tang, and Wojciech Zaremba.
\newblock Openai gym, 2016.

\bibitem{Chua2018DeepRL}
Kurtland Chua, Roberto Calandra, Rowan McAllister, and Sergey Levine.
\newblock Deep reinforcement learning in a handful of trials using
  probabilistic dynamics models.
\newblock In {\em Advances in Neural Information Processing Systems}, pages
  4754--4765, 2018.

\bibitem{clavera2020modelaugmented}
Ignasi Clavera, Yao Fu, and Pieter Abbeel.
\newblock Model-augmented actor-critic: Backpropagating through paths.
\newblock In {\em International Conference on Learning Representations}, 2020.

\bibitem{czarnecki2017sobolev}
Wojciech~M Czarnecki, Simon Osindero, Max Jaderberg, Grzegorz Swirszcz, and
  Razvan Pascanu.
\newblock Sobolev training for neural networks.
\newblock In {\em Advances in Neural Information Processing Systems}, pages
  4278--4287, 2017.

\bibitem{deisenroth2013survey}
Marc~Peter Deisenroth, Gerhard Neumann, Jan Peters, et~al.
\newblock A survey on policy search for robotics.
\newblock {\em Foundations and Trends{\textregistered} in Robotics},
  2(1--2):1--142, 2013.

\bibitem{d2019gradient}
Pierluca D'Oro, Alberto~Maria Metelli, Andrea Tirinzoni, Matteo Papini, and
  Marcello Restelli.
\newblock Gradient-aware model-based policy search.
\newblock In {\em Thirty-Fourth AAAI Conference on Artificial Intelligence},
  2020.

\bibitem{drucker1992improving}
Harris Drucker and Yann Le~Cun.
\newblock Improving generalization performance using double backpropagation.
\newblock {\em IEEE Transactions on Neural Networks}, 3(6):991--997, 1992.

\bibitem{Fairbank2008ReinforcementLB}
Michael Fairbank.
\newblock Reinforcement learning by value gradients.
\newblock {\em ArXiv}, abs/0803.3539, 2008.

\bibitem{fairbank2014value}
Michael Fairbank.
\newblock {\em Value-gradient learning}.
\newblock PhD thesis, City University London, 2014.

\bibitem{farahmand2017value}
Amir-massoud Farahmand, Andre Barreto, and Daniel Nikovski.
\newblock Value-aware loss function for model-based reinforcement learning.
\newblock In {\em Artificial Intelligence and Statistics}, pages 1486--1494,
  2017.

\bibitem{fazlyab2019efficient}
Mahyar Fazlyab, Alexander Robey, Hamed Hassani, Manfred Morari, and George
  Pappas.
\newblock Efficient and accurate estimation of lipschitz constants for deep
  neural networks.
\newblock In {\em Advances in Neural Information Processing Systems}, pages
  11423--11434, 2019.

\bibitem{feinberg2018model}
Vladimir Feinberg, Alvin Wan, Ion Stoica, Michael~I Jordan, Joseph~E Gonzalez,
  and Sergey Levine.
\newblock Model-based value estimation for efficient model-free reinforcement
  learning.
\newblock {\em arXiv preprint arXiv:1803.00101}, 2018.

\bibitem{fujimoto2018addressing}
Scott Fujimoto, Herke Hoof, and David Meger.
\newblock Addressing function approximation error in actor-critic methods.
\newblock In {\em International Conference on Machine Learning}, pages
  1582--1591, 2018.

\bibitem{gelada2019deepmdp}
Carles Gelada, Saurabh Kumar, Jacob Buckman, Ofir Nachum, and Marc~G.
  Bellemare.
\newblock {D}eep{MDP}: Learning continuous latent space models for
  representation learning.
\newblock In {\em Proceedings of the 36th International Conference on Machine
  Learning}, pages 2170--2179, 2019.

\bibitem{gulrajani2017improved}
Ishaan Gulrajani, Faruk Ahmed, Martin Arjovsky, Vincent Dumoulin, and Aaron~C
  Courville.
\newblock Improved training of wasserstein gans.
\newblock In {\em Advances in neural information processing systems}, pages
  5767--5777, 2017.

\bibitem{heess2015learning}
Nicolas Heess, Gregory Wayne, David Silver, Timothy Lillicrap, Tom Erez, and
  Yuval Tassa.
\newblock Learning continuous control policies by stochastic value gradients.
\newblock In {\em Advances in Neural Information Processing Systems}, pages
  2944--2952, 2015.

\bibitem{Ilyas2020A}
Andrew Ilyas, Logan Engstrom, Shibani Santurkar, Dimitris Tsipras, Firdaus
  Janoos, Larry Rudolph, and Aleksander Madry.
\newblock A closer look at deep policy gradients.
\newblock In {\em International Conference on Learning Representations}, 2020.

\bibitem{janner2019trust}
Michael Janner, Justin Fu, Marvin Zhang, and Sergey Levine.
\newblock When to trust your model: Model-based policy optimization.
\newblock In {\em Advances in Neural Information Processing Systems}, pages
  12498--12509, 2019.

\bibitem{joseph2013reinforcement}
Joshua Joseph, Alborz Geramifard, John~W Roberts, Jonathan~P How, and Nicholas
  Roy.
\newblock Reinforcement learning with misspecified model classes.
\newblock In {\em 2013 IEEE International Conference on Robotics and
  Automation}, pages 939--946. IEEE, 2013.

\bibitem{kalweit2017uncertainty}
Gabriel Kalweit and Joschka Boedecker.
\newblock Uncertainty-driven imagination for continuous deep reinforcement
  learning.
\newblock In {\em Conference on Robot Learning}, pages 195--206, 2017.

\bibitem{konda2000actor}
Vijay~R Konda and John~N Tsitsiklis.
\newblock Actor-critic algorithms.
\newblock In {\em Advances in neural information processing systems}, pages
  1008--1014, 2000.

\bibitem{kurutach2018modelensemble}
Thanard Kurutach, Ignasi Clavera, Yan Duan, Aviv Tamar, and Pieter Abbeel.
\newblock Model-ensemble trust-region policy optimization.
\newblock In {\em International Conference on Learning Representations}, 2018.

\bibitem{lillicrap2015continuous}
Timothy~P Lillicrap, Jonathan~J Hunt, Alexander Pritzel, Nicolas Heess, Tom
  Erez, Yuval Tassa, David Silver, and Daan Wierstra.
\newblock Continuous control with deep reinforcement learning.
\newblock {\em arXiv preprint arXiv:1509.02971}, 2015.

\bibitem{liu2019variance}
Liyuan Liu, Haoming Jiang, Pengcheng He, Weizhu Chen, Xiaodong Liu, Jianfeng
  Gao, and Jiawei Han.
\newblock On the variance of the adaptive learning rate and beyond.
\newblock {\em arXiv preprint arXiv:1908.03265}, 2019.

\bibitem{lunz2018adversarial}
Sebastian Lunz, Ozan {\"O}ktem, and Carola-Bibiane Sch{\"o}nlieb.
\newblock Adversarial regularizers in inverse problems.
\newblock In {\em Advances in Neural Information Processing Systems}, pages
  8507--8516, 2018.

\bibitem{maddison2016concrete}
Chris~J Maddison, Andriy Mnih, and Yee~Whye Teh.
\newblock The concrete distribution: A continuous relaxation of discrete random
  variables.
\newblock {\em arXiv preprint arXiv:1611.00712}, 2016.

\bibitem{metelli2018policy}
Alberto~Maria Metelli, Matteo Papini, Francesco Faccio, and Marcello Restelli.
\newblock Policy optimization via importance sampling.
\newblock In {\em Advances in Neural Information Processing Systems}, pages
  5442--5454, 2018.

\bibitem{mnih2016asynchronous}
Volodymyr Mnih, Adria~Puigdomenech Badia, Mehdi Mirza, Alex Graves, Timothy
  Lillicrap, Tim Harley, David Silver, and Koray Kavukcuoglu.
\newblock Asynchronous methods for deep reinforcement learning.
\newblock In {\em International conference on machine learning}, pages
  1928--1937, 2016.

\bibitem{paszke2019pytorch}
Adam Paszke, Sam Gross, Francisco Massa, Adam Lerer, James Bradbury, Gregory
  Chanan, Trevor Killeen, Zeming Lin, Natalia Gimelshein, Luca Antiga, et~al.
\newblock Pytorch: An imperative style, high-performance deep learning library.
\newblock In {\em Advances in Neural Information Processing Systems}, pages
  8024--8035, 2019.

\bibitem{prokhorov1997adaptive}
Danil~V Prokhorov and Donald~C Wunsch.
\newblock Adaptive critic designs.
\newblock {\em IEEE transactions on Neural Networks}, 8(5):997--1007, 1997.

\bibitem{Puterman1994MarkovDP}
Martin~L. Puterman.
\newblock Markov decision processes: Discrete stochastic dynamic programming.
\newblock In {\em Wiley Series in Probability and Statistics}, 1994.

\bibitem{ramachandran2017searching}
Prajit Ramachandran, Barret Zoph, and Quoc~V Le.
\newblock Searching for activation functions.
\newblock {\em arXiv preprint arXiv:1710.05941}, 2017.

\bibitem{Rifaiicml}
Salah Rifai, Pascal Vincent, Xavier Muller, Xavier Glorot, and Yoshua Bengio.
\newblock Contractive auto-encoders: Explicit invariance during feature
  extraction.
\newblock In {\em ICML}, pages 833--840, 2011.

\bibitem{saremi2019approximating}
Saeed Saremi.
\newblock On approximating $\nabla f $ with neural networks.
\newblock {\em arXiv preprint arXiv:1910.12744}, 2019.

\bibitem{saremi2019neural}
Saeed Saremi and Aapo Hyvarinen.
\newblock Neural empirical bayes.
\newblock {\em Journal of Machine Learning Research}, 20:1--23, 2019.

\bibitem{schmidhuber1990making}
J{\"u}rgen Schmidhuber.
\newblock Making the world differentiable: On using self-supervised fully
  recurrent neural networks for dynamic reinforcement learning and planning in
  non-stationary environments.
\newblock 1990.

\bibitem{schmidhuber2015deep}
J{\"u}rgen Schmidhuber.
\newblock Deep learning in neural networks: An overview.
\newblock {\em Neural networks}, 61:85--117, 2015.

\bibitem{schulman2015gradient}
John Schulman, Nicolas Heess, Theophane Weber, and Pieter Abbeel.
\newblock Gradient estimation using stochastic computation graphs.
\newblock In {\em Advances in Neural Information Processing Systems}, pages
  3528--3536, 2015.

\bibitem{schulman2015trust}
John Schulman, Sergey Levine, Pieter Abbeel, Michael Jordan, and Philipp
  Moritz.
\newblock Trust region policy optimization.
\newblock In {\em International conference on machine learning}, pages
  1889--1897, 2015.

\bibitem{schulman2017proximal}
John Schulman, Filip Wolski, Prafulla Dhariwal, Alec Radford, and Oleg Klimov.
\newblock Proximal policy optimization algorithms.
\newblock {\em arXiv preprint arXiv:1707.06347}, 2017.

\bibitem{shyam2019model}
Pranav Shyam, Wojciech Ja{\'s}kowski, and Faustino Gomez.
\newblock Model-based active exploration.
\newblock In {\em International Conference on Machine Learning}, pages
  5779--5788, 2019.

\bibitem{silver2014deterministic}
David Silver, Guy Lever, Nicolas Heess, Thomas Degris, Daan Wierstra, and
  Martin Riedmiller.
\newblock Deterministic policy gradient algorithms.
\newblock 2014.

\bibitem{simongabriel2019}
Carl-Johann Simon-Gabriel, Yann Ollivier, Leon Bottou, Bernhard Sch{\"o}lkopf,
  and David Lopez-Paz.
\newblock First-order adversarial vulnerability of neural networks and input
  dimension.
\newblock In {\em Proceedings of the 36th International Conference on Machine
  Learning}, volume~97 of {\em Proceedings of Machine Learning Research}, pages
  5809--5817, Long Beach, California, USA, 09--15 Jun 2019. PMLR.

\bibitem{smith1995penalty}
Alice~E Smith, David~W Coit, Thomas Baeck, David Fogel, and Zbigniew
  Michalewicz.
\newblock Penalty functions.
\newblock {\em Handbook of evolutionary computation}, 97(1):C5, 1995.

\bibitem{sutton1988learning}
Richard~S Sutton.
\newblock Learning to predict by the methods of temporal differences.
\newblock {\em Machine learning}, 3(1):9--44, 1988.

\bibitem{sutton1991dyna}
Richard~S Sutton.
\newblock Dyna, an integrated architecture for learning, planning, and
  reacting.
\newblock {\em ACM Sigart Bulletin}, 2(4):160--163, 1991.

\bibitem{sutton2018reinforcement}
Richard~S Sutton and Andrew~G Barto.
\newblock {\em Reinforcement learning: An introduction}.
\newblock MIT press, 2018.

\bibitem{sutton2000policy}
Richard~S Sutton, David~A McAllester, Satinder~P Singh, and Yishay Mansour.
\newblock Policy gradient methods for reinforcement learning with function
  approximation.
\newblock In {\em Advances in neural information processing systems}, pages
  1057--1063, 2000.

\bibitem{tangkaratt2014model}
Voot Tangkaratt, Syogo Mori, Tingting Zhao, Jun Morimoto, and Masashi Sugiyama.
\newblock Model-based policy gradients with parameter-based exploration by
  least-squares conditional density estimation.
\newblock {\em Neural networks}, 57:128--140, 2014.

\bibitem{todorov2012mujoco}
Emanuel Todorov, Tom Erez, and Yuval Tassa.
\newblock Mujoco: A physics engine for model-based control.
\newblock In {\em 2012 IEEE/RSJ International Conference on Intelligent Robots
  and Systems}, pages 5026--5033. IEEE, 2012.

\bibitem{tsitsiklis1997analysis}
John~N Tsitsiklis and Benjamin Van~Roy.
\newblock Analysis of temporal-diffference learning with function
  approximation.
\newblock In {\em Advances in neural information processing systems}, pages
  1075--1081, 1997.

\bibitem{van2019use}
Hado~P van Hasselt, Matteo Hessel, and John Aslanides.
\newblock When to use parametric models in reinforcement learning?
\newblock In {\em Advances in Neural Information Processing Systems}, pages
  14322--14333, 2019.

\bibitem{wang2003model}
Xin Wang and Thomas~G Dietterich.
\newblock Model-based policy gradient reinforcement learning.
\newblock In {\em Proceedings of the 20th International Conference on Machine
  Learning (ICML-03)}, pages 776--783, 2003.

\bibitem{weber2019credit}
Th{\'e}ophane Weber, Nicolas Heess, Lars Buesing, and David Silver.
\newblock Credit assignment techniques in stochastic computation graphs.
\newblock In {\em The 22nd International Conference on Artificial Intelligence
  and Statistics}, pages 2650--2660, 2019.

\bibitem{werbosdhdp}
Paul Werbos.
\newblock Advanced forecasting methods for global crisis warning and models of
  intelligence.
\newblock {\em General Systems Yearbook}, 22, 01 1977.

\bibitem{williams1992simple}
Ronald~J Williams.
\newblock Simple statistical gradient-following algorithms for connectionist
  reinforcement learning.
\newblock {\em Machine learning}, 8(3-4):229--256, 1992.

\end{thebibliography}

\clearpage
\appendix
\section{Proof of Proposition \ref{th:gradient_loss_bound}\label{apx:proofs}}
The proof follows directly from the Deterministic Policy Gradient Theorem,
Therefore, the Proposition inherits all of its smoothness assumptions about the Markov Decision Process~\cite{silver2014deterministic}.
\gradientLossBound*
\begin{proof}
    \begin{align}
        \label{eq:dpg_plugging}
        \| \nabla_{\vtheta} J (\vtheta) - \widehat{\nabla}_{\vtheta} J (\vtheta) \| &=
        \frac{1}{1 - \gamma} \left\| \ints d_\mu^\pi (s) \left( \nabla_a Q^\pi(s, a)|_{a = \pi(s)} - \nabla_a \widehat{Q}(s, a)|_{a = \pi(s)} \right)  \nabla_{\vtheta} \pi (s) \d s \right\| \\
        &=\frac{1}{1 - \gamma} \left\| \ints d_\mu^\pi (s) \nabla_a \delta^{\pi,\widehat{Q}}(s,a)|_{a = \pi(s)}  \nabla_{\vtheta} \pi (s) \d s \right\| \label{eq:delta_plugging} \\
        &\leq \frac{1}{1 - \gamma} \ints d_\mu^\pi (s) \left\| \nabla_a \delta^{\pi,\widehat{Q}}(s,a)|_{a = \pi(s)} \right\| \cdot \left\| \nonumber \nabla_{\vtheta} \pi (s) \right\| \d s \\
        &\leq \frac{L_\pi}{1 - \gamma} \ints d_\mu^\pi (s) \left\| \nabla_a \delta^{\pi,\widehat{Q}}(s,a)|_{a = \pi(s)} \right\| \d s. \label{eq:lipschitz_policy}
    \end{align}
\end{proof}
Equation~\ref{eq:dpg_plugging} follows from the Deterministic Policy Gradient Theorem.
To obtain Equation~\ref{eq:delta_plugging}, we exploit the definition of $\delta^{\pi,\widehat{Q}}$ and linearity of differentiation.
Finally, in Equation~\ref{eq:lipschitz_policy}, we use the Lipschitz policy assumption.

\section{Additional Experiments}\label{apx:additional_experiments}

\subsection{Unconstrained Action-Value Gradient learning\label{apx:degenerate}}
Proposition~\ref{th:gradient_loss_bound} directly encourages training the critic by minimizing the bound on the error of the policy gradient, \ie the norm of the action-gradient of the policy evaluation error.
\begin{wrapfigure}{l}{0.3\textwidth}
\centering
\includegraphics[width=0.3\textwidth]{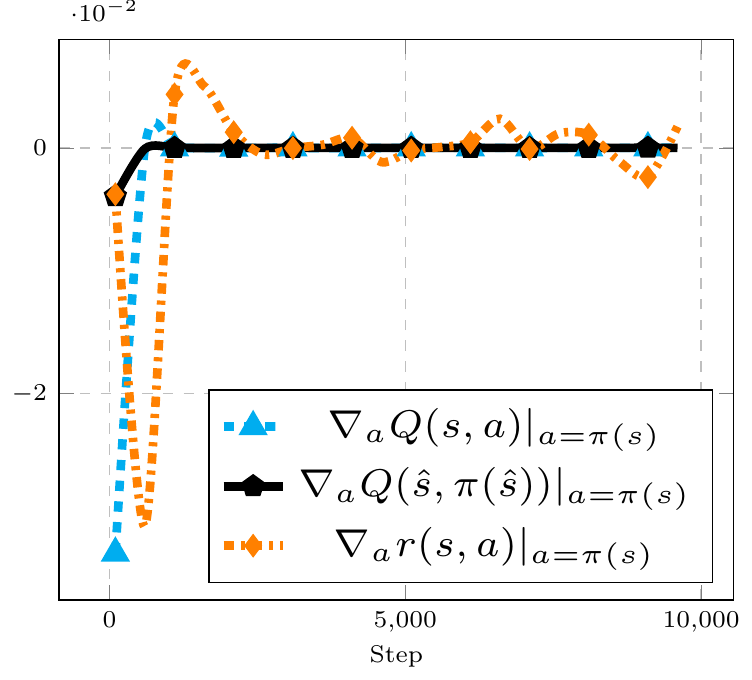}
\caption{\label{fig:degenerate} Action-gradients}
\vspace{-0.35cm}
\end{wrapfigure}
However, we found a direct optimization of this bound, by means of the TD-error, difficult in the context of Dyna-like algorithms.
We analyze this behavior in the Pendulum-v0 environment~\cite{brockman2016gym}, instantiating a version of MAGE based on DDPG~\cite{lillicrap2015continuous} (MAGE-DDPG).
To understand the learning dynamics of the action-value gradients in a way that is not affected by the model bias, we employ the differentiable version of the real environment dynamics and test MAGE without the TD-error regularization (\ie with $\lambda=0$).
Therefore, at each step, $Q_{\vomega}$ is improved by minimizing the norm of $\widehat{\delta}$ computed on transitions whose next state is sampled from $p$.
Unfortunately, no useful learning can be achieved in this setting: a degenerate solution consisting of $\widehat{Q}$ such that $\left\| \nabla_a \widehat{Q}(s,a) \right\| \approx 0, \forall s \in \mathcal{S}, \forall a \in \mathcal{A}$ is rapidly reached, as shown in Figure~\ref{fig:degenerate}.
We employ exactly the settings and hyperparamters that are successfully employed in the full version of MAGE.

We believe that understanding whether, or under which circumstances, the direct minimization of the bound in Proposition~\ref{th:gradient_loss_bound} is possible is an interesting open question.

\subsection{MAGE with Trained Reward Function\label{apx:reward}}
\begin{figure}[t]
	\centering
	\includegraphics[width=\textwidth]{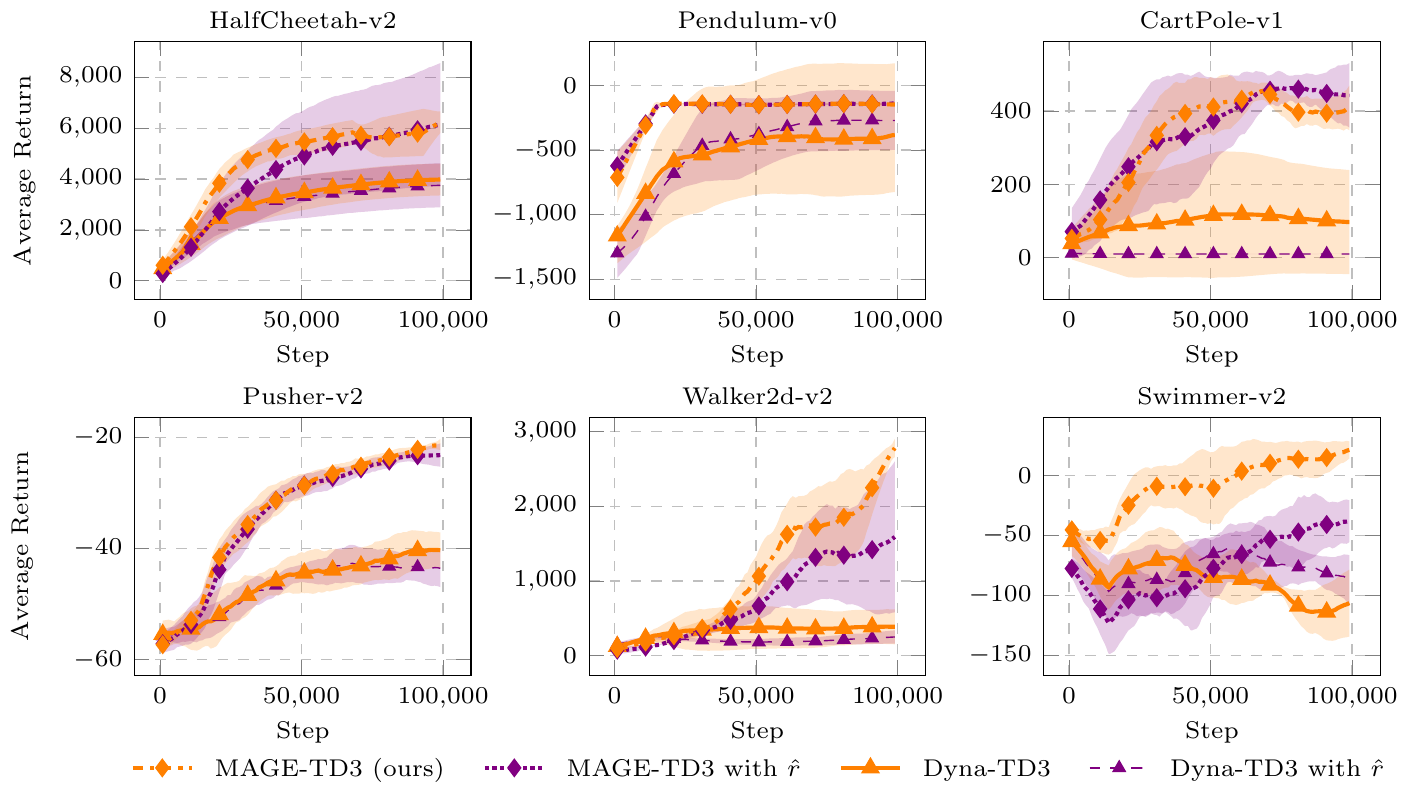}
	\caption{Performance in terms of average return of MAGE-TD3 and Dyna-TD3 with and without the use of an estimated reward function $\hat{r}$ (5 runs, 95\% c.i.).}
	\label{fig:reward_estimation_complete}
\end{figure}
As discussed in Section~\ref{sec:experiments}, MAGE is able to achieve good performance even with an estimated reward function.
We report in Figure~\ref{fig:reward_estimation_complete} the full results of this experiment on all the considered environments.
For reference, we test MAGE and Dyna-TD3 as well as their versions in which the ground-truth reward function is substituted with one trained on the experience replay data using the MSE loss.

The results indicate that learning the reward function when it is not directly accessible does not produce any catastrophic harm to the performance of the algorithm.
Therefore, our approach remains competitive even when the assumption of a know differentiable reward function is not satisfied.

\subsection{Importance of model capacity}
The quality of the learned model is of paramount importance for most MBRL algorithms, whose performance, generally, deteriorates when the model is not enough expressive for a given task.
\begin{wrapfigure}{r}{0.4\textwidth}
	\centering
	\includegraphics[width=0.4\textwidth]{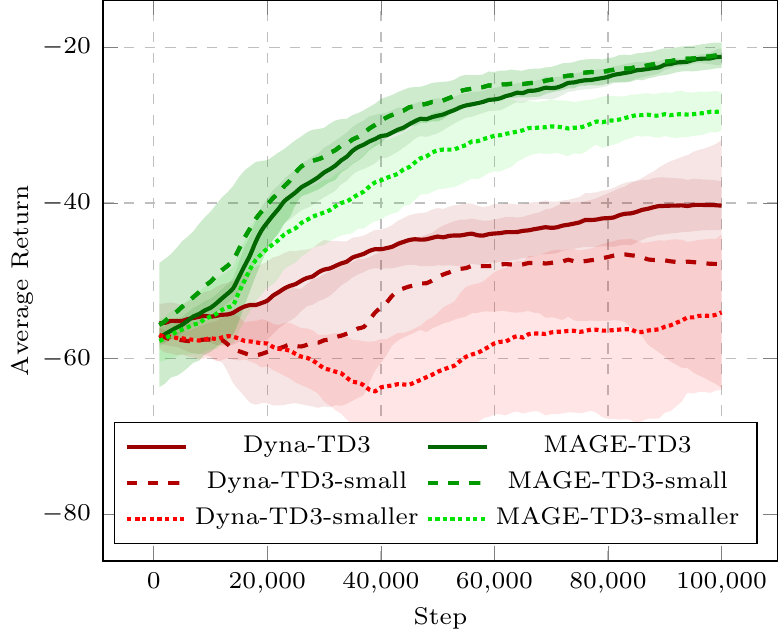}
    \caption{\label{fig:evaluation_exp} Pusher-v2 (5 runs, 95\% c.i.).}
\end{wrapfigure}
Thus, we performed an additional experiment to investigate how the performance of MAGE (compared to the Dyna-TD3 baseline) is affected by the use of less powerful models. We evaluated two versions of the model with reduced capacity: (i) only 2 members in the ensemble, 4 hidden layers and 256 units per layer (\emph{-small suffix}); (ii) no ensemble (a single model), 2 hidden layers, 256 units per layer (\emph{-smaller suffix}).
Recall that the original setting involves a more powerful model with 8 members in the ensemble, 4 hidden layers, 512 units per layer (no suffix).
The results on the Pusher-v2 environment, reported in Figure~\ref{fig:evaluation_exp}, show that MAGE is robust to the presence of a misspecified model: while a simpler but still quite capable model does no harm to MAGE, a significantly smaller model has a reasonable impact on the obtained average return.

\subsection{Importance of $\lambda$}
Our practical solution to viably minimize the norm of the action-gradient of the TD-error involves a constrained optimization problem, that limits the magnitude of the traditional TD-error.
\begin{wrapfigure}{r}{0.4\textwidth}
\centering
\includegraphics[width=0.4\textwidth]{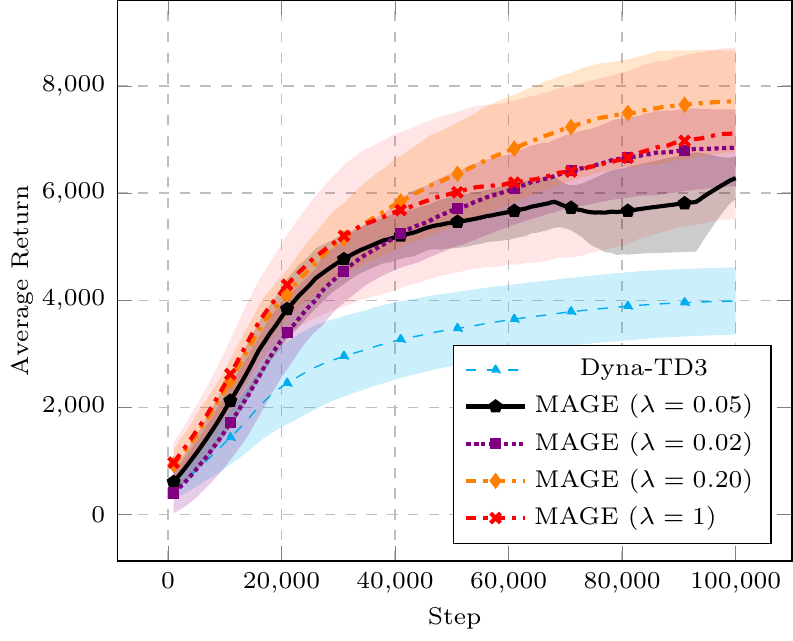}
\caption{\label{fig:lambda_ablation} Median return of MAGE for different $\lambda \in [0,1]$ (5 runs, 95\% c.i.).}
\end{wrapfigure}
We approximately solve this problem by transforming it into an unconstrained one, introducing a new hyperparameter $\lambda$.
$\lambda$ can be seen as a weight that is given to the traditional TD-error, assigning more or less importance to it compared to the error on the action-gradient.
In the main experiment shown in the paper, we used $\lambda=0.2$, which was chosen arbitrarily. How sensitive is MAGE to this parameter?

To study that, we carried out an experiment on the environment HalfCheetah-v2, by testing the TD3-based version of MAGE using four different values of $\lambda$.
The results are shown in Figure~\ref{fig:lambda_ablation}, and demonstrate that, regardless of the value of $\lambda$, MAGE is significantly better than the baseline Dyna-TD3.
MAGE is therefore robust to the choice of this hyperparameter.
Notice also that the $\lambda=0.2$ we used is probably not optimal for some environments: thus, the absolute returns obtained by MAGE could be improved for particular tasks if a different hyperparameter is used for each of them, which we leave for a future work.
Nonetheless, we decided to report in Figure~\ref{fig:performance} results for a fixed value of $\lambda$ across all the environments to show the robustness and ease of use of MAGE.

\subsection{Asymptotic Performance}
A particularly important question concerning the performance of a model-based reinforcement learning algorithm is whether it matches the one of model-free baselines.
We answer this question by running MAGE-TD3 and Dyna-TD3 until convergence on the HalfCheetah-v2 environment and replicating the evaluation procedure used to obtain the asymptotic performance of an algorithm in the original TD3 paper~\cite{fujimoto2018addressing}.
This amounts to repeating a number of trials (we use $5$ of them for both the model-based algorithms) and considering the maximum average return over them.

\begin{table}[htbp]
 \centering
 \caption{Maximum Average Return on HalfCheetah-v2 over different trials for three versions of TD3. Best performance is in bold.}\label{tab:asymptotic}
 \begin{tabular}{c @{\hskip 2\tabcolsep} c}
    \toprule
  Algorithm & Maximum Average Return \\
  \midrule
  MAGE-TD3  ($3 \cdot 10^6$ steps, 5 trials)  & $\bm{9660.79 \pm 2821.31$} \\
  Dyna-TD3  ($3 \cdot 10^6$ steps, 5 trials)  & $8372.76 \pm 1859.73$ \\
  TD3       ($1 \cdot 10^7$ steps, 10 trials)  & $9636.95 \pm 859.07$ \\
  \bottomrule
  \end{tabular}
\end{table}

Results are reported in Table~\ref{tab:asymptotic}.
Two conclusions can be drawn from them.
The first one is that MAGE not only shows superior sample-efficiency compared to its model-free counterpart, but also matches its asymptotic performance in a smaller number of steps (less than one third).
Secondly, the inferior performance of Dyna-TD3 compared to model-free TD3 once again reinforces the evidence that any simple introduction of a dynamics model into a model-free algorithm does not guarantee an improvement, when measuring performance using commonly employed metrics.

\section{Action-Gradient of the TD-error}
In this section, we present some additional information about the computation of the action-gradient of the TD-error, carried out during the critic learning step of MAGE.
To implement MAGE, we employed PyTorch~\cite{paszke2019pytorch} and its automatic differentiation tools in order to compute the second-order gradient required by our method.
In this way, we did not need to explicitly derive a closed form expression for a given model class or neural network architecture.
Nonetheless, we report here the general expression for the action-gradient of the TD-error:
\begin{equation}
    \frac{\partial \widehat{\delta}^{\pi,\widehat{Q}}(s,a,s')}{\partial a} = \frac{\partial r(s,a)}{\partial a} + \gamma  \frac{\partial \widehat{p}(s'|s,a)}{\partial a} \Bigg( \frac{\partial \widehat{Q}(s', \pi(s'))}{\partial s'} + \frac{\partial \pi(s')}{\partial s'} \frac{\partial \widehat{Q}(s',a')}{\partial a'} \Bigg) - \frac{\partial \widehat{Q}(s,a)}{\partial a}.
\end{equation}
In MAGE, we employ a Gaussian stochastic model $\widehat{p}$: therefore, its action-gradient $\frac{\partial \widehat{p}(s'|s,a)}{\partial a}$ can be obtained by reparameterizing this distribution using randomly drawn unit Gaussian noise together with the learned mean and standard deviations.
In our experiments, we only deal with continuous state and action spaces; however, by leveraging appropriate approximations (\eg concrete distributions~\cite{maddison2016concrete}), similar techniques can be employed also in the case of a discrete state space $\mathcal{S}$.

\begin{figure}
\includegraphics[width=\linewidth]{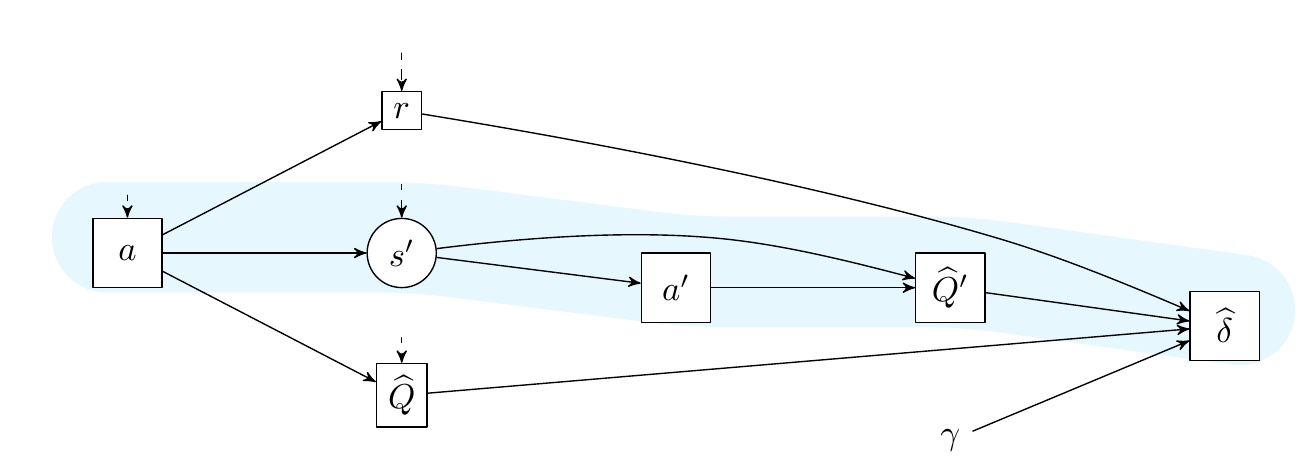}
\caption{Alternative view of the computational graph constructed during the computation of the TD-error $\widehat{\delta}$, following the notation from~\cite{schulman2015gradient}. Round nodes represent stochastic variables, squares represent deterministic variables. Nodes with incoming dashed edges also depend on the state $s$.}
    \label{fig:computational_graph}
\end{figure}
To further visualize the constructed computational graph, it is possible to employ a different view, inspired recent work on stochastic computational graphs~\cite{schulman2015gradient}, \wrt the one leveraged in Figure~\ref{fig:computational_graph2} (see Figure~\ref{fig:computational_graph}).
In our case, the only possibly stochastic entity is the approximate model.

\section{Experimental details}\label{apx:details}
\subsection{Instantiating MAGE}
We presented in Algorithm~\ref{alg:MAGE} a generic version with MAGE, whose structure can be adapted to many model-free actor-critic algorithms.
In most of our experiments, we use TD3~\cite{fujimoto2018addressing} as a reference algorithm, due to its stability and performance, giving birth to MAGE-TD3.
\begin{algorithm}[t]
\small
\caption{Model-based Action-Gradient-Estimator TD3 (MAGE-TD3)}
\label{alg:MAGE_TD3}
\hspace*{\algorithmicindent} \textbf{Input:} Initial buffer $\mathcal{B}$, parameters $\left\{ \vomega, \vphi_1, \vphi_2, \vtheta \right\}$, target parameters $\left\{ \Bar{\vphi}_1 = \vphi_1, \Bar{\vphi}_2 = \vphi_2, \Bar{\vtheta} = \vtheta \right\}$
\begin{algorithmic}
\For{each iteration}
\State Collect transition $(s,a,s')$ acting according to exploratory policy $ \pi_{\epsilon}(s) = \pi_{\vtheta}(s) + \epsilon, \epsilon \sim \mathcal{N}(0, \sigma)$
\State $\mathcal{B} \gets \mathcal{B} \cup \left\{ (s,a,s') \right\}$
\For{each model learning step}
\State $\vomega \gets \vomega - \alpha_{p} \nabla_{\vomega} \ell(s,a,s'; \vomega), \qquad (s,a,s') \sim \mathcal{B}$
\EndFor
\For{each policy optimization step}
\State Extract state $s$ after sampling $(s,\cdot,\cdot) \sim \mathcal{B}$
\State $\widehat{y} \gets r(s,\pi_{\epsilon}(s)) + \gamma \min_{i=1,2}Q_{\Bar{\vphi}_i}(\widehat{s}, \pi_{\Bar{\vtheta}}(\widehat{s})), \qquad \qquad \widehat{s} \sim p_{\vomega}(\cdot|s,\pi_{\vtheta}(s)), \epsilon \sim \mathrm{clip} (\mathcal{N}(0, \Bar{\sigma}))$
\For{$i \in \left\{ 1, 2 \right\}$}
\State $\widehat{\delta}(s,a,\widehat{s};\vphi_i) \gets \widehat{y} - Q_{\vphi_i}(s, a), \qquad \qquad \qquad \qquad a=\pi_{\vtheta}(s)$
\State $\vphi_i \gets \vphi - \alpha_{Q} \nabla_{\vphi_i} \left( \left\| \nabla_a \widehat{\delta}(s, a, \widehat{s};\vphi_i) \big|_{a=\pi_{\vtheta}(s)} \right\| + \lambda \left| \widehat{\delta}(s, a, \widehat{s};\vphi_i) \right|  \right)$
\State $\Bar{\vphi}_i \gets \tau \vphi_i + (1 - \tau) \Bar{\vphi}_i$
\EndFor
\If{$t \, \mathrm{mod} \, d = 0$}
    \State $\vtheta \gets \vtheta + \alpha_{\pi} \nabla_\vtheta \min_{i=1,2} Q_{\vphi_i}(s,\pi_{\vtheta}(s))$
    \State $\Bar{\vtheta} \gets \tau \vtheta + (1 - \tau) \Bar{\vtheta}$
\EndIf
\EndFor
\EndFor
\end{algorithmic}
\end{algorithm}
In Algorithm~\ref{alg:MAGE_TD3}, we report pseudocode for this version of our method.
Unfortunately, while the use of the model is unchanged \wrt the abstract version, the addition of a second value function implies the computational overhead of using second-order differentiation twice.

\subsection{Hyperparameters}
We employ $1000$ ($100$ for the Pendulum and Cartpole environments) warmup steps of interaction with the environment before starting to update the critic and the actor.
We use an ensemble of $8$ neural network as approximate dynamics models, that learn both mean and standard deviation of a Gaussian distribution, of $4$ hidden layers of $512$ neurons ($2$ layers with $128$ units for the Pendulum and Cartpole environments) with swish~\cite{ramachandran2017searching} activation function.
They are trained by maximum likelihood, minimizing the loss function, after every 25 steps of interaction with the environment, on $120$ batches of $256$ samples.
We employ multi-layer perceptrons also for the actor ($2$ layers, $128$ neurons each for the Pendulum and CartPole environments and $284$ for all the others) and the critic ($2$ layers, $384$ neurons each).
Model, actor and critic are trained with the RAdam optimizer~\cite{liu2019variance}, with learning rates of $0.0001$ and default parameters, and a weight decay of $0.0001$ for the approximate dynamics model.
For MAGE-TD3, we employ $\lambda=0.2$ for the experiment showed in Figure~\ref{fig:performance} and $\lambda=0.05$ for the other experiments.
We update the critic and the actor by extracting $1024$ ($512$ for the Pendulum and CartPole environment) states from the buffer of collected transitions, then sampling from the ensemble by first randomly selecting one of the members and then sampling an estimated difference between current and next state.
The critic is trained by employing an Huber loss.

In MAGE-TD3, we employ the suggested hyperparameters of TD3: an action noise of $0.1$, a target noise of $0.2$, noise clipping to $0.5$ and a delay in the policy updates of $2$.
During training, the actions that the actor executes in the environment are perturbed by Gaussian noise $\epsilon \sim \mathcal{N}(0, 0.1)$.
We obtain the target networks for both actor and critic by Polyak averaging with decay $\tau = 0.995$.

For the reward estimation experiments, we employ a neural network with $3$ hidden layers of $256$ units ($1$ hidden layer with $128$ units for the Pendulum and Cartpole environments) and swish activations
We employ a discount factor of $\gamma=0.99$.

For our experiment on the evaluation of gradients, we initially collect $200$ transitions, then simply run the algorithms with standard settings but without any update of the actor.
Every $10$ steps, we collect $10$ trajectories in the environment and average the error over them.
We compute the ground-truth $\nabla_a G(s,a)|_{a=\pi(s)}$, with $G(s,a) = \sum_{t=0}^{H-1} \gamma^t r(s_t, a_t) |_{s = s_0, a= a_0}$ being the empirical return, by automatic differentiation, leveraging the differentiable oracle model.
We then average, the resulting discounted error:
\begin{equation}
    L(Q^\pi, \widehat{Q}) = \frac{1}{H} \sum_{t=0}^{H-1} \gamma^t \| \nabla_a G(s_t,a_t) - \nabla_a \widehat{Q}(s_t, a_t) \|_1 .
\end{equation}
We average this value across the $10$ different trajectories.

Across all the experiments, despite the formulation we used throughout the paper, we employ a reward $r(s,a,s')$, which is thus also a function of the next state.
Formally, it is possible to interpret the state-action reward we use throughout the paper as $r(s,a) = \E_{s' \sim p(\cdot|s,a) }\left[ r(s,a,s')\right]$.
For generating the performance plots, we evaluate, after every $1000$ steps of environment interaction, the actor for $10$ episodes and average the result.
To improve presentation, we then uniformly smooth the resulting curves with a window size of $25$.

\end{document}